\PassOptionsToPackage{table}{xcolor}
\documentclass[dvipsnames]{article}

\usepackage[T1]{fontenc}
\usepackage{graphicx}
\usepackage{wrapfig}
\usepackage{booktabs}
\usepackage{float}
\usepackage{multirow}
\usepackage{makecell}
\usepackage{array}
\usepackage{tabularx}
\usepackage{threeparttable}
\usepackage{amsmath,amssymb,amsthm}
\usepackage{mathtools}
\usepackage{bm}
\usepackage{algorithm}
\usepackage{algorithmic}
\usepackage{enumitem}
\usepackage{colortbl}
\usepackage{siunitx}
\usepackage{tcolorbox}
\usepackage{textcomp}
\usepackage[normalem]{ulem}
\usepackage{colm2024_conference}

\useunder{\uline}{\ul}{}
\tcbuselibrary{listings,breakable,skins}
\newtcblisting{kloopwideprompt}[1]{
  enhanced,
  unbreakable,
  listing only,
  width=\textwidth,
  title={#1},
  colback=black!2,
  colframe=black!25,
  colbacktitle=black!6,
  coltitle=black,
  fonttitle=\small\bfseries,
  boxrule=0.4pt,
  arc=1mm,
  left=2mm,
  right=2mm,
  top=1.5mm,
  bottom=1.5mm,
  before skip=8pt,
  after skip=10pt,
  listing options={
    language={},
    basicstyle=\ttfamily\fontsize{8.5}{10.5}\selectfont,
    keywordstyle=\color{black},
    commentstyle=\color{black},
    stringstyle=\color{black},
    breaklines=true,
    breakatwhitespace=false,
    columns=fullflexible,
    keepspaces=true,
    showstringspaces=false,
    showspaces=false,
    showtabs=false,
    mathescape=false,
    texcl=false,
    upquote=true,
    tabsize=2,
    aboveskip=0pt,
    belowskip=0pt
  }
}

\theoremstyle{definition}
\newtheorem{definition}{Definition}
\newtheorem{proposition}[definition]{Proposition}

\definecolor{headerGray}{HTML}{f0f0f0}
\definecolor{rowHighlight}{HTML}{ecf0ff}

\title{Weakness Localization Is Not Enough: State-Aware Recursive Knowledge Improvement for Large Language Models}

\author{
  \textbf{Runze Jiang\textsuperscript{1,2}},
  \textbf{Yan Wang\textsuperscript{2}},
  \textbf{Boran Jin\textsuperscript{2,3}},
  \\
  \textbf{Bingyu Zhu\textsuperscript{2,\textdagger}},
  \textbf{Longtao Huang\textsuperscript{2}}
  \\
  \textsuperscript{1}Peking University,
  \textsuperscript{2}Alibaba Group
  \\
  \textsuperscript{3}Delft University of Technology
  \\
  \small{\textbf{\textsuperscript{\textdagger}Corresponding author}}
}

\begin{document}

\maketitle

\begin{abstract}
Knowledge improvement in large language models is shifting from predefined knowledge incorporation pipelines toward evaluation-guided updating. Existing methods use evaluation to localize model-specific knowledge weaknesses and guide targeted post-training. However, identifying \emph{what to improve} does not necessarily provide sufficient information for deciding \emph{how to improve it}. The same localized weakness can be compatible with distinct operational knowledge states: the relevant knowledge may be unavailable, available but difficult to access reliably, or accessible but ineffective for downstream use. We call this ambiguity \textbf{knowledge-state aliasing}. When compatible states favor different interventions, greater diagnostic resolution alone is insufficient for intervention selection. Moreover, model updates can change the underlying knowledge state before the observed weakness disappears, such that an intervention appropriate in one round may no longer address the bottleneck in the next.
To address these challenges, we propose \textbf{Recursive Knowledge Improvement (RKI)}, which formulates knowledge improvement as a sequential, state-aware intervention process. RKI organizes knowledge into graph-structured units that provide a shared basis for diagnostic evaluation and targeted intervention across rounds. At each round, RKI localizes knowledge weaknesses, estimates the model's operational knowledge state from diagnostic evidence and intervention history, selects a corresponding intervention, and re-evaluates the updated model before determining the next intervention. On \textsc{HotpotEval} and \textsc{SafeInternalEval}, RKI outperforms existing knowledge incorporation methods by 18.41 points on average. Further analyses show that knowledge states provide additional guidance for intervention selection beyond weakness localization and can change across model updates even when the observed weakness persists.
\end{abstract}

\section{Introduction}

\begin{wrapfigure}{r}{0.48\textwidth}
  \vspace{-0.8em}
  \centering
  \includegraphics[width=\linewidth,height=0.24\textheight,keepaspectratio]{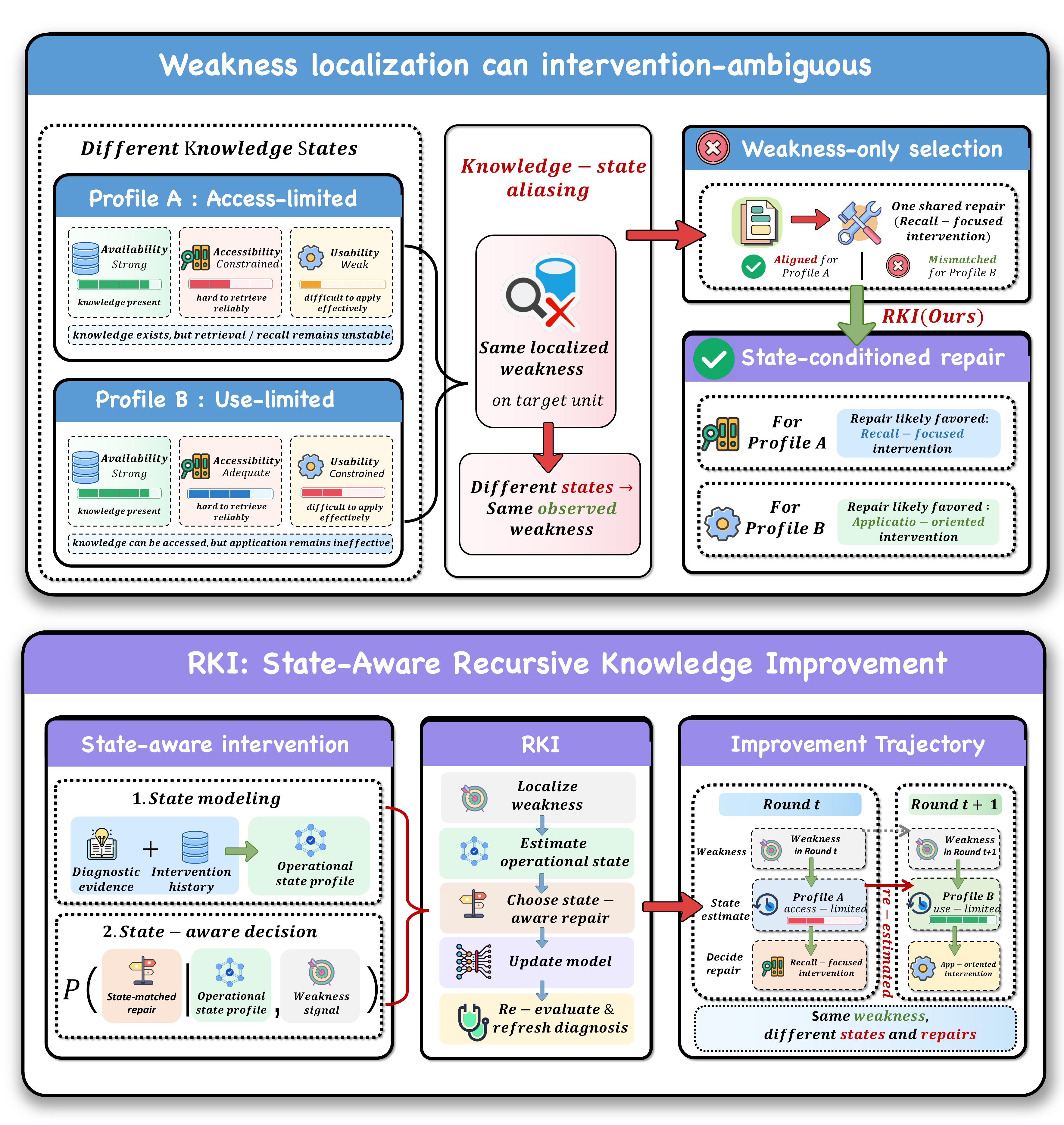}
  \caption{From weakness localization to state-aware knowledge improvement. RKI estimates operational knowledge states to guide interventions and re-estimates them after each model update.}
  \label{fig:motivation}
  \vspace{-0.8em}
\end{wrapfigure}

Knowledge improvement aims to incorporate external knowledge into large language models so that it can be reliably accessed and used in downstream tasks. In Web settings, this knowledge is typically derived from Web corpora and organized into knowledge graphs, which support applications such as multi-hop question answering and knowledge-grounded agents. Conventional incorporation pipelines~\citep{gururangan2020don,cheng2024adapting} typically follow a predefined procedure, where training data and update strategies are specified with limited consideration of the model's current knowledge state.

Recent work has shifted toward evaluation-guided improvement, where evaluation extends beyond retrospective assessment to serve as a signal for prioritizing subsequent knowledge updates. GraphGen and SENATOR identify model-specific knowledge deficiencies and target post-training toward the corresponding knowledge regions~\citep{chen2025graphgen,wei2026structural}. AutoDetect, EvalTree, and CRAFT further localize weaknesses and guide targeted improvement through agent-based assessment, hierarchical capability profiling, and rubric-level evaluation, respectively~\citep{cheng2024autodetect,zeng2025evaltree,gupta2026craft}. These methods identify \emph{what} should be improved at increasingly fine levels of granularity. However, even a fine-grained diagnosis may leave the choice among training interventions underdetermined. This raises a distinct question: \emph{does localizing a weakness provide sufficient information to determine how it should be improved?}

The information needed for this choice can lie in the model's \emph{operational knowledge state}, consistent with non-binary characterizations of LLM knowledge~\citep{li2025knowledge}. The same localized weakness can reflect different states: the relevant knowledge may be unavailable, available but difficult to access reliably, or accessible but ineffective for understanding, reasoning, or application. We call the compatibility of the same observed weakness with distinct operational states \textbf{knowledge-state aliasing}. This ambiguity becomes consequential for knowledge improvement when these states favor different interventions: factual exposure, recall-focused training, and application-oriented examples can address different bottlenecks within the same knowledge region. In such cases, greater \emph{diagnostic resolution} does not by itself ensure \emph{intervention sufficiency}. When compatible states share no optimal intervention, a weakness-only policy cannot be optimal for all of them. Figure~\ref{fig:motivation} illustrates this ambiguity and the state-aware intervention process that it motivates.

A separate challenge arises as knowledge states evolve across model updates. The same observed weakness may persist after an update even when the operational knowledge state underlying it has changed. For example, previously unavailable knowledge may become accessible but remain difficult to recall reliably, or become reliably recallable while remaining difficult to apply effectively. The appropriate intervention can therefore change before the observed weakness disappears. Continuing with the previous intervention may then fail to address the new bottleneck. This motivates re-evaluation and knowledge-state re-estimation before selecting the next intervention.

To address these issues, we propose \textbf{Recursive Knowledge Improvement (RKI)}, which views knowledge improvement as a sequential, state-aware intervention process. RKI organizes knowledge into graph-structured units that align diagnostic evidence, target knowledge, and intervention records across rounds. At each round, diagnostic evaluation first localizes the model's knowledge weaknesses, while diagnostic evidence and intervention history are used to estimate the model's operational knowledge-state profile. The weakness profile identifies \emph{what} needs improvement, while the state estimate provides additional information for deciding \emph{how} each weakness should be addressed. RKI then applies the selected intervention, re-evaluates the updated model over the same knowledge units, and revises both the weakness profile and state estimate before selecting the next intervention. This closed-loop process allows RKI to adapt subsequent interventions as the model's knowledge state evolves across rounds.

We evaluate RKI on the public multi-hop knowledge benchmark \textsc{HotpotEval} and the enterprise safety and compliance benchmark \textsc{SafeInternalEval}. RKI achieves the best performance on both benchmarks, improving the average score across them by 18.41 points over the strongest baseline. More importantly, our experiments identify two phenomena underlying these gains. First, the same localized failure can correspond to different patterns of knowledge access, and conditioning intervention selection on the operational knowledge state further improves RKI by 5.17 points over Weakness Only under the same weakness information. Second, even when an application weakness persists after a model update, 57.1\% of the corresponding knowledge units show a different recognition--recall pattern, showing that a persistent failure signal does not imply a stable underlying knowledge state. Together, these results support the two core design choices of RKI: using knowledge states to distinguish intervention needs under the same localized weakness, and re-estimating those states after model updates to guide subsequent interventions.

Our contributions are as follows:
\begin{itemize}
    \item We identify \textbf{knowledge-state aliasing} and characterize when diagnostic resolution is insufficient for intervention selection.

    \item We propose \textbf{Recursive Knowledge Improvement (RKI)}, which couples state-aware intervention selection with recursive knowledge-state re-estimation.

    \item We empirically show that operational knowledge states provide additional information for intervention selection beyond weakness localization, and that these states can change across model updates even when the observed weakness persists.
\end{itemize}

\section{Related Work}

\subsection{Knowledge Incorporation for Large Language Models}

Prior work has extensively explored incorporating external knowledge into large language models. Domain-specific models such as PMC-LLaMA, SaulLM, and HuatuoGPT-II acquire specialized knowledge through pre-training and instruction tuning~\citep{gururangan2020don,wu2023bloomberggpt,wu2024pmc,colombo2024saullm,chen2023huatuogpt}, while K-Adapter and knowledge-aligned domain shift tuning provide adapter-based and parameter-efficient adaptation, respectively~\citep{wang2021k,kawamae2025knowledge}. Other studies focus on constructing richer training data from a given knowledge source. Self-QA converts domain knowledge into instruction data~\citep{zhang2023self} and EntiGraph leverages entity relations to generate synthetic text for continued pre-training~\citep{yang2025synthetic}. AdaptLLM and Instruction Pre-Training likewise derive training data from raw corpora~\citep{cheng2024adapting,cheng2024instruction}. A complementary line edits parametric knowledge~\citep{mitchell2021fast,meng2022locating,meng2022mass,zhang2024instructedit,li2026emsedit} or post-processes black-box outputs~\citep{song2025assessing}, rather than recursively selecting training interventions. Despite these advances, knowledge incorporation is often organized as a predefined pipeline in which data construction and training strategies are specified in advance, with limited use of evaluation feedback to adapt subsequent knowledge updates.

\subsection{Evaluation-Guided Targeted Model Improvement}

Recent work has begun to use model evaluation to guide subsequent updates, making observed weaknesses a signal for targeted improvement. GraphGen uses calibration uncertainty over fine-grained knowledge graphs to identify knowledge gaps and prioritize synthetic data generation accordingly~\citep{chen2025graphgen}, while SENATOR combines structural entropy with MCTS to locate knowledge-deficient regions and generate targeted data for repair~\citep{wei2026structural}. Beyond knowledge-specific improvement, AutoDetect links agent-based weakness detection to targeted data augmentation~\citep{cheng2024autodetect}, while EvalTree and CRAFT use hierarchical or rubric-level evaluation for targeted fine-tuning~\citep{zeng2025evaltree,gupta2026craft}. More broadly, R-Select uses validation feedback for data selection~\citep{gao2026r}, while the weakness-guided methods above determine what should be improved, often at increasingly fine levels of diagnostic granularity. However, greater diagnostic resolution does not necessarily imply intervention sufficiency: localizing a weakness may still leave the choice among candidate interventions underdetermined.

\subsection{Knowledge States in Large Language Models}

Knowledge-boundary research provides taxonomies and discovery methods~\citep{li2025knowledge,wang2026knowledge}. More generally, knowledge in large language models is not always well described by a binary distinction between ``known'' and ``unknown''. \citet{kazemnejad2023measuring} distinguish knowledge acquisition from utilization, showing that acquired knowledge may still fail to support downstream tasks, while \citet{gekhman2024does} link fine-tuning on new knowledge to hallucinations. \citet{gekhman2025inside} distinguish internally encoded knowledge from knowledge expressed in model outputs, while~\citet{calderon2026empty} separate missing knowledge from different levels of recall accessibility. \citet{li2025demystifying} further distinguish task-relevant knowledge retrieval from subsequent reasoning. These studies show that model knowledge cannot be adequately characterized by answer correctness alone: relevant knowledge may differ in whether it has been acquired, can be reliably accessed, and can be effectively used in downstream reasoning or application. However, prior work primarily studies these distinctions as properties to be measured or diagnosed, with limited attention to their role in intervention selection. RKI instead treats the operational knowledge state as intervention-relevant information: it uses the estimated state to distinguish among candidate interventions for the same localized weakness, and re-estimates that state after each model update to adapt subsequent intervention decisions.

\begin{figure*}[t]
\centering
  \includegraphics[width=1\linewidth]{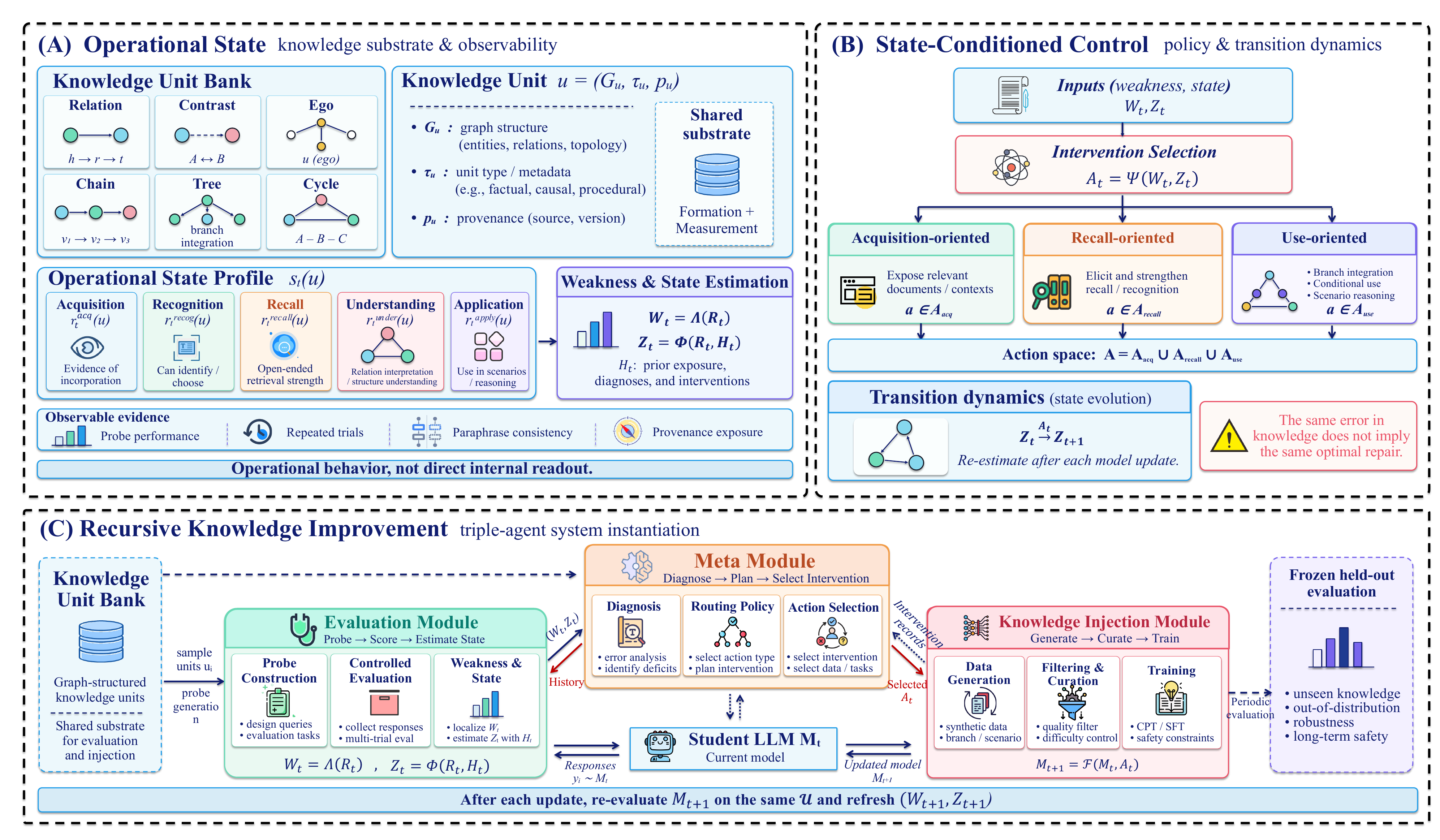}
  \caption{\textbf{Overview of RKI.} RKI aligns diagnosis and intervention over graph-structured knowledge units and recursively adapts interventions to the model's evolving operational knowledge state.}
  \label{fig:framework}
\end{figure*}

\section{Knowledge States for Knowledge Improvement}
\label{sec:knowledge_state_for_knowledge_improvement}

Given a domain knowledge base $\mathcal{K}$ and a model $M_t$ at round $t$, we consider a set of knowledge units $\mathcal{U}$ that provides a common basis for diagnostic evaluation and intervention-data generation. Diagnostic evaluation produces observations over these knowledge units, which we collectively denote by $R_t$. From $R_t$, we derive a weakness profile
\begin{equation}
W_t = \Lambda(R_t),
\label{eq:weakness_profile}
\end{equation}
where $W_t$ identifies the knowledge units on which the model exhibits deficiencies and thus localizes what currently requires improvement. We next introduce the operational knowledge state to characterize the knowledge conditions underlying these weaknesses and examine how knowledge interventions alter this state.

\subsection{Operational Knowledge State}

A weakness profile captures observable deficiencies revealed by diagnostic evaluation, but does not fully characterize the model's current knowledge condition. The model's \emph{operational knowledge state} at round $t$, denoted by $Z_t$, is a model-level profile estimated from the collective diagnostic evidence over $\mathcal{U}$. It summarizes the knowledge conditions relevant to subsequent intervention, including whether the relevant knowledge is available, can be reliably accessed, and can be effectively used for downstream understanding, reasoning, or application.

Because similar observable behavior may arise from different underlying knowledge conditions, the operational knowledge state cannot be identified from a single evaluation outcome alone. We abstract the state-estimation process as
\begin{equation}
Z_t = \Phi(R_t,H_t),
\label{eq:operational_knowledge_state}
\end{equation}
where $H_t$ summarizes the history available up to round $t$, including prior knowledge exposure, diagnostic results, and interventions, and $\Phi$ denotes the state-estimation function.

Thus, $W_t$ and $Z_t$ provide complementary information at different granularities. The weakness profile $W_t$ localizes the knowledge units that currently require improvement, while the model-level operational knowledge state $Z_t$ characterizes the model's overall knowledge condition and provides additional information for deciding how these targets should be addressed.

\subsection{Knowledge Intervention and State Transition}

Let $A_t$ denote the set of knowledge interventions applied at round $t$, which together specify the knowledge to be updated and the corresponding update strategy for each intervention target. Applying $A_t$ transforms the current model as
\begin{equation}
M_{t+1} = \mathcal{F}(M_t,A_t),
\label{eq:knowledge_intervention}
\end{equation}
where $\mathcal{F}$ denotes the model-update procedure.

A knowledge intervention may alter the model's operational knowledge state even when the observed weakness has not yet disappeared. For example, previously unavailable knowledge may become accessible after an update, while the limiting factor shifts from availability to reliable access, understanding, or application. Each model update therefore induces a corresponding state transition:
\begin{equation}
Z_t \xrightarrow{A_t} Z_{t+1}.
\label{eq:state_transition}
\end{equation}

This transition makes knowledge improvement inherently sequential. An intervention appropriate for $Z_t$ need not remain appropriate once the model reaches $Z_{t+1}$, even if the corresponding weakness remains observable. Continuing to rely on the previous state estimate after an update may therefore base subsequent interventions on stale information. Each intervention should consequently be followed by re-evaluation of the updated model and re-estimation of its current operational knowledge state.

\subsection{Knowledge-State Aliasing}

The preceding discussion raises a key question: does the weakness profile obtained from diagnostic evaluation contain sufficient information for intervention selection? When the same observed weakness is compatible with distinct operational knowledge states, conditioning on $W_t$ alone obscures state differences that may affect the appropriate intervention. We refer to this ambiguity as \emph{knowledge-state aliasing}.

For an observed weakness $w$, we define the set of compatible operational knowledge states as
\begin{equation}
\Gamma(w)
=
\left\{
\Phi(R,H)
\,|\,
\Lambda(R)=w
\right\}.
\label{eq:compatible_states}
\end{equation}

\begin{definition}[Knowledge-State Aliasing]
For an observed weakness $w$, \emph{knowledge-state aliasing} occurs if
\begin{equation}
|\Gamma(w)|>1,
\end{equation}
that is, if the same observed weakness is compatible with multiple distinct operational knowledge states.
\label{def:knowledge_state_aliasing}
\end{definition}

Knowledge-state aliasing alone does not imply different intervention requirements. It becomes consequential for intervention selection when states compatible with the same weakness have disjoint sets of optimal interventions. Let $\mathcal{A}$ denote the intervention space, and let $Q(z,a)$ denote the expected knowledge improvement from applying intervention $a\in\mathcal{A}$ in operational knowledge state $z$. The set of optimal interventions is
\begin{equation}
\mathcal{A}^{\star}(z)
=
\arg\max_{a\in\mathcal{A}} Q(z,a).
\label{eq:optimal_intervention_set}
\end{equation}

\begin{proposition}[Insufficiency of Weakness-Only Intervention]
Suppose an observed weakness $w$ admits two compatible operational knowledge states $z,z'\in\Gamma(w)$ such that
\begin{equation}
\mathcal{A}^{\star}(z)
\cap
\mathcal{A}^{\star}(z')
=
\varnothing.
\end{equation}
Then no intervention rule $\pi$ conditioned only on $w$ can be optimal for both $z$ and $z'$.
\label{prop:weakness_only_insufficient}
\end{proposition}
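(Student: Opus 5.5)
The plan is to first make precise what it means for an intervention rule conditioned only on $w$ to be optimal for a state, and then argue by contradiction. A rule $\pi$ conditioned only on the weakness profile is a map from observed weaknesses to interventions, $\pi:w\mapsto\pi(w)\in\mathcal{A}$. If the rule is randomized, we take it as a distribution $\pi(\cdot\mid w)$ over $\mathcal{A}$. Because it sees only $w$, it must prescribe the same action, or the same distribution, whenever the observed weakness equals $w$. This holds in particular for every state in $\Gamma(w)$, as defined in \eqref{eq:compatible_states}. We say $\pi$ is optimal for a state $z\in\Gamma(w)$ if the intervention it selects at $w$ attains $\max_{a\in\mathcal{A}}Q(z,a)$, that is, $\pi(w)\in\mathcal{A}^{\star}(z)$ by \eqref{eq:optimal_intervention_set}. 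In the randomized case the corresponding condition is $\mathbb{E}_{a\sim\pi(\cdot\mid w)}Q(z,a)=\max_{a}Q(z,a)$.

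The deterministic case then follows directly. Suppose $\pi$ were optimal for both $z$ and $z'$. Then $a_0:=\pi(w)$ lies in $\mathcal{A}^{\star}(z)$ and also in $\mathcal{A}^{\star}(z')$, because the same $a_0$ is chosen under both states. That puts $a_0$ in $\mathcal{A}^{\star}(z)\cap\mathcal{A}^{\star}(z')$, contradicting the assumption that this intersection is empty.

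For randomized rules I would add one step. Expected value equals the maximum only if $\pi(\cdot\mid w)$ puts all its mass on $\mathcal{A}^{\star}(z)$, since every action outside that set has strictly smaller $Q(z,\cdot)$. Applying the same reasoning to $z'$, the support must also lie in $\mathcal{A}^{\star}(z')$. The support is nonempty, so there is an action in the empty intersection, which is again a contradiction. For a general intervention space (for example, an infinite $\mathcal{A}$ with a measure), the statement would instead be "mass zero outside $\mathcal{A}^\star(z)$", with the same conclusion.

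The main obstacle is the formalization, not the argument itself. We must state explicitly that being conditioned only on $w$ means the rule cannot depend on $z$ beyond $w$. The optimal sets should also be assumed nonempty: if the argmax were empty, neither state could have an optimal rule at all, and the conclusion would hold vacuously. Once these conventions are fixed, the result is a short pigeonhole-style contradiction.
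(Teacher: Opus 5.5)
Your proof is correct, and your deterministic argument is the paper's: a weakness-only rule must select the same $a_w=\pi(w)$ under both $z$ and $z'$, so optimality for both would put $a_w\in\mathcal{A}^{\star}(z)\cap\mathcal{A}^{\star}(z')=\varnothing$, which is a contradiction. The paper only treats deterministic rules $\pi:\mathcal{W}\rightarrow\mathcal{A}$, so your extension to randomized rules (optimality forces the distribution to put full mass on both optimal sets) is a sound generalization that the paper does not need.
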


The result follows directly because a weakness-only rule assigns the same intervention to both states, while their optimal intervention sets are disjoint. It formalizes the gap between \emph{diagnostic resolution} and \emph{intervention sufficiency}: even a precisely localized weakness may remain insufficient for intervention selection when compatible operational knowledge states favor different interventions. Thus, the operational knowledge state provides intervention-relevant information beyond the weakness profile alone. A formal proof is provided in Appendix~\ref{app:proof_weakness_only}.
\section{Recursive Knowledge Improvement}
\label{sec:recursive_knowledge_improvement}

As established in Section~\ref{sec:knowledge_state_for_knowledge_improvement}, a localized weakness may be insufficient for intervention selection, while the operational knowledge state underlying that weakness can change after model updating. We instantiate these observations in \textbf{Recursive Knowledge Improvement (RKI)}, which alternates between diagnostic evaluation, state estimation, targeted intervention, and re-evaluation (Figure~\ref{fig:framework}).

\begin{table*}[t]
\centering
\caption{Statistics of the datasets used in our experiments. The corpus serves as the knowledge source for graph construction, diagnostic evaluation, and knowledge incorporation, while the test set is used for closed-book evaluation.}
\label{tab:benchmark_statistics}
\scriptsize
\setlength{\tabcolsep}{4.5pt}
\renewcommand{\arraystretch}{1.08}
\begin{tabular*}{\textwidth}{
@{\extracolsep{\fill}}
lll
cc
cc
cc
@{}}
\toprule
\multirow{2}{*}{\textbf{Dataset}}
& \multirow{2}{*}{\textbf{Domain}}
& \multirow{2}{*}{\textbf{Language}}
& \multicolumn{2}{c}{\textbf{Samples}}
& \multicolumn{2}{c}{\textbf{Avg. Tokens}}
& \multicolumn{2}{c}{\textbf{Max. Tokens}} \\
\cmidrule(lr){4-5}
\cmidrule(lr){6-7}
\cmidrule(lr){8-9}
& & & Corpus & Test & Corpus & Test & Corpus & Test \\
\midrule

\textsc{HotpotEval}
& General
& English
& 9,990
& 1,000
& 137.5
& 21.3
& 1,984
& 74 \\

\textsc{SafeInternalEval}
& Security \& Compliance
& Chinese
& 1,000
& 600
& 352.4
& 41.2
& 701
& 139 \\

\bottomrule
\end{tabular*}
\end{table*}

\subsection{Framework Overview}
\label{sec:rki_framework_overview}

Given a domain knowledge base $\mathcal{K}$ and a model $M_t$ at round $t$, RKI first constructs a knowledge graph $\mathcal{G}$ from $\mathcal{K}$ and extracts a set of graph-structured knowledge units $\mathcal{U}$. These units provide a common basis for diagnostic evaluation and intervention-data generation. Diagnostic evaluation produces observations $R_t$ over the knowledge units, from which RKI derives a weakness profile $W_t$ and, together with the intervention history $H_t$, estimates a model-level operational knowledge state $Z_t$. The weakness profile determines which knowledge units are intervention targets, while the global state profile informs the intervention strategy applied to those targets. RKI then assigns interventions to the identified targets, updates the model, and repeats the same process on the updated model:
\begin{equation}
\begin{aligned}
(M_t,\mathcal{K})
&\rightarrow R_t
\rightarrow (W_t,Z_t)
\rightarrow A_t \\
&
\rightarrow M_{t+1}
\rightarrow R_{t+1}
\rightarrow (W_{t+1},Z_{t+1})
\rightarrow \cdots .
\end{aligned}
\label{eq:rki_recursive_process}
\end{equation}

As summarized in Eq.~\eqref{eq:rki_recursive_process}, RKI does not treat a diagnosis as fixed across model updates. Each intervention is followed by re-evaluation, and subsequent intervention decisions are made from the updated weakness and state estimates.

\subsection{Graph-Structured Knowledge Units}

RKI organizes the knowledge in $\mathcal{G}$ into a collection of local graph structures
\begin{equation}
\mathcal{U}
=
\{u_1,u_2,\ldots,u_N\},
\label{eq:rki_knowledge_units}
\end{equation}
which serve as the common knowledge objects for diagnosis, intervention, and re-evaluation. We instantiate $\mathcal{U}$ in Eq.~\eqref{eq:rki_knowledge_units} with the six types summarized in Table~\ref{tab:rki_knowledge_units}. The same unit can be used to construct diagnostic probes, generate intervention data, and re-evaluate the updated model, providing a shared address space that aligns diagnosis and intervention across rounds.

\subsection{Operational Knowledge-State Estimation}
\label{sec:rki_operational_knowledge_state_estimation}

RKI characterizes the operational knowledge state in terms of three aspects: \emph{availability}, \emph{accessibility}, and \emph{usability}. To make these aspects operationally measurable, we instantiate them with six diagnostic dimensions: \emph{acquisition}, \emph{recognition}, \emph{recall}, \emph{understanding}, \emph{reasoning}, and \emph{application}, as summarized in Table~\ref{tab:rki_state_probes}.

Availability is estimated from acquisition-related diagnostic evidence, recognition and recall outcomes, together with the known intervention history that records which knowledge has been explicitly introduced in previous rounds. Accessibility is characterized primarily by recognition and recall, while usability is characterized by understanding, reasoning, and application. Together, these signals provide complementary evidence about the model's current operational knowledge condition.

At round $t$, let $R_t$ denote the diagnostic evidence collected over $\mathcal{U}$, and let $H_t$ summarize the history of the recursive improvement process up to round $t$, including known knowledge exposure, previous diagnostic results, state estimates, and intervention records. The operational knowledge-state profile is then estimated as
\begin{equation}
\begin{aligned}
Z_t
&=
\Phi(R_t,H_t) \\
&=
\left(
z_t^{\mathrm{acq}},
z_t^{\mathrm{recog}},
z_t^{\mathrm{recall}},
z_t^{\mathrm{under}},
z_t^{\mathrm{reason}},
z_t^{\mathrm{apply}}
\right).
\end{aligned}
\label{eq:rki_state_estimation}
\end{equation}

Each component in Eq.~\eqref{eq:rki_state_estimation} summarizes the corresponding diagnostic dimension over the current knowledge space.

\subsection{State-Aware Intervention and Recursive Re-estimation}
\label{sec:rki_state_aware_recursive_intervention}

Given $(W_t,Z_t)$, RKI treats each knowledge unit identified by $W_t$ as an intervention target. The target unit specifies which knowledge should be improved, while the model-level state $Z_t$ informs the intervention strategy applied to that target. For each weak knowledge unit $u\in W_t$, we write the selected intervention as
\begin{equation}
a_t(u)
=
\Psi(u,Z_t),
\qquad u\in W_t,
\label{eq:rki_state_aware_intervention}
\end{equation}
and collect the resulting interventions as
\begin{equation}
A_t
=
\{a_t(u)\mid u\in W_t\}.
\end{equation}
Here $\Psi$ denotes the intervention-selection procedure, which conditions the choice of strategy on the target unit and the model-level state.

We consider intervention families corresponding to different operational bottlenecks. Availability-oriented interventions increase exposure to the target knowledge, for example through continued pre-training or structured knowledge exposure. Accessibility-oriented interventions emphasize reliable retrieval, for example through recall-focused supervision. Usability-oriented interventions provide examples that require the target knowledge to be understood, reasoned over, or applied. These intervention families define a set of candidate strategies and do not impose a fixed mapping from knowledge states to training methods. The state estimate is used to identify which type of intervention is more appropriate under the current model condition.

Applying the selected intervention produces
\begin{equation}
M_{t+1}
=
\mathcal{F}(M_t,A_t),
\label{eq:rki_model_update}
\end{equation}
After the update in Eq.~\eqref{eq:rki_model_update}, RKI re-evaluates $M_{t+1}$ over the same knowledge units to obtain $R_{t+1}$ and recompute $(W_{t+1},Z_{t+1})$. The next intervention is therefore selected from the updated diagnosis. In this way, RKI adapts intervention decisions as the model's operational knowledge state evolves across rounds.

\section{Experiments}
\label{sec:experiments}

\subsection{Experimental Setup}
\label{sec:experimental_setup}

\paragraph{Benchmarks and Knowledge Sources.}
We evaluate RKI in two knowledge-intensive settings. For the public benchmark, we sample from \textsc{HotpotEval} released with GraphGen~\citep{chen2025graphgen}, retaining instances that can be aligned with their source corpus. \textsc{HotpotEval} is adapted from HotpotQA~\citep{yang2018hotpotqa} and contains general-domain multi-hop questions over multiple supporting facts. It provides a source corpus for knowledge incorporation and a QA test set for closed-book evaluation. We also construct \textsc{SafeInternalEval} from enterprise safety and compliance documents. Its evaluation questions are manually constructed and reviewed against the source documents, and are designed to avoid direct overlap with the knowledge units used for intervention-data generation. Dataset statistics are summarized in Table~\ref{tab:benchmark_statistics}, with additional details and examples provided in Appendix~\ref{app:benchmark_details_examples}.

\paragraph{Implementation.}
We use Qwen3-4B-Instruct-2507~\citep{yang2025qwen3} as the common backbone model and DeepSeek-V4-Pro~\citep{xu2026deepseek} for knowledge graph construction, evaluation-instance generation, and data filtering. From each source corpus, we construct a knowledge graph and extract graph-structured knowledge units that support diagnosis, weakness localization, and intervention-data generation. Table~\ref{tab:knowledge_unit_distribution} reports the 32,485 units extracted from \textsc{SafeInternalEval}. These counts reflect structural knowledge units in the graph and do not directly correspond to the number of evaluation or training instances. We further filter the units by question-generation suitability and knowledge value before constructing diagnostic and knowledge-improvement data.

\begin{table}[t]
\centering
\caption{Distribution of graph-structured knowledge units constructed from \textsc{SafeInternalEval}.}
\label{tab:knowledge_unit_distribution}
\small
\renewcommand{\arraystretch}{1.12}
\begin{tabular*}{\linewidth}{
@{\extracolsep{\fill}}
l
r
r
@{}}
\toprule
\textbf{Unit Type}
& \textbf{Units}
& \textbf{Proportion} \\
\midrule

Relation
& 14,439
& 44.45\% \\

Ego
& 1,340
& 4.12\% \\

Contrast
& 1,066
& 3.28\% \\

Cycle
& 975
& 3.00\% \\

Chain
& 13,494
& 41.54\% \\

Tree
& 1,171
& 3.60\% \\

\midrule
\textbf{Total}
& \textbf{32,485}
& \textbf{100.00\%} \\

\bottomrule
\end{tabular*}
\end{table}

\begin{figure*}
    \centering
    \includegraphics[width=1\linewidth]{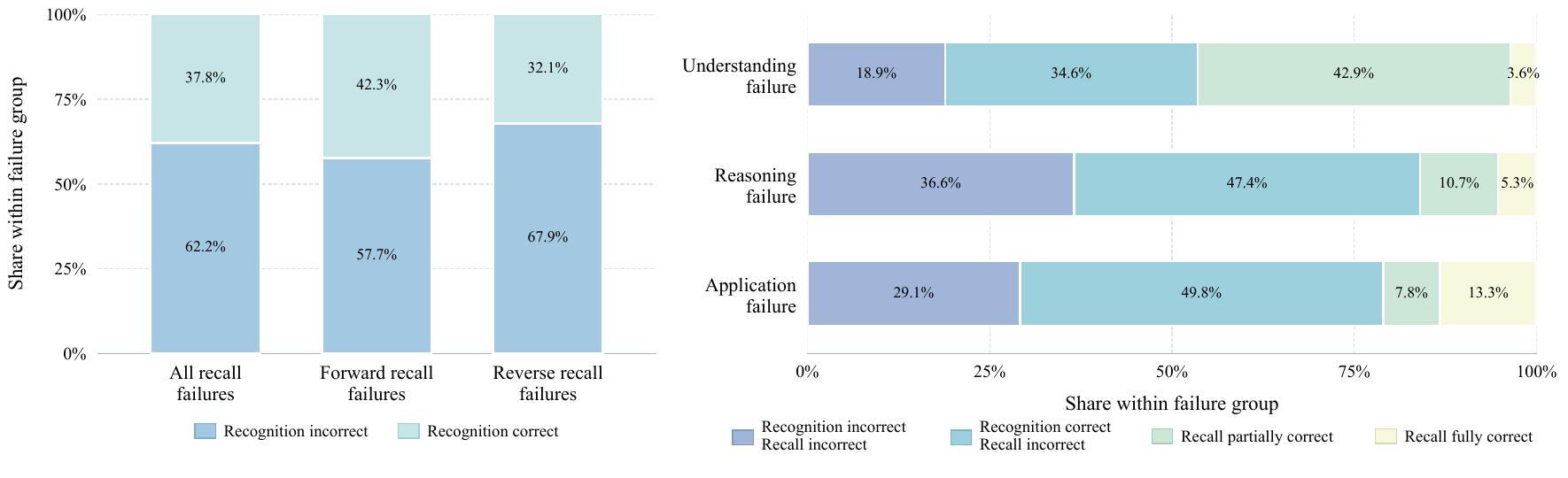}
    \caption{
Knowledge-state aliasing under the same localized failure signal.
Left: recognition outcomes for recall-failure instances.
Right: recognition and recall patterns for understanding-, reasoning-, and application-failure units.
}
    \label{fig:knowledge_state_aliasing}
\end{figure*}

\paragraph{Comparison Methods.}
Following the baseline setting of GraphGen~\citep{chen2025graphgen}, we compare RKI with GENIE~\citep{yehudai2024achieving}, LongForm~\citep{koksal2024longform}, WRAP~\citep{maini2024rephrasing}, Self-QA~\citep{zhang2023self}, EntiGraph~\citep{yang2025synthetic}, and GraphGen. All methods use the same source corpora, backbone model, and training configurations for a consistent comparison. Training details are provided in Appendix~\ref{app:training_configurations}. We do not include SENATOR~\citep{wei2026structural} in the main comparison because its deficiency detection assumes an external, curated domain knowledge graph, whereas our evaluation uses only the source corpus of each benchmark as the shared knowledge source. Applying SENATOR here would require replacing its external graph with an automatically constructed one, resulting in a substantially different experimental setting.

\paragraph{Evaluation.}
For single-choice recognition probes, we rotate answer positions across four variants and control option lengths to reduce position and length biases. For fill-in-the-blank recall probes, we perform three independent random samples to reduce sampling variance. For open-ended evaluation, we use rubric-based LLM judgments as the primary evaluation score. We do not use lexical-overlap metrics such as token-F1 or ROUGE-F as primary measures because they mainly capture surface-form similarity and can be sensitive to response length and paraphrasing, which may not reliably reflect semantic correctness in open-ended knowledge evaluation. The full judging protocol is provided in Appendix~\ref{app:judging_protocol}. We further validate the LLM judge against human annotations, with agreement results reported in Appendix~\ref{app:judging_agreement}. Inference settings are provided in Appendix~\ref{app:inference_configurations}.

\subsection{Knowledge-State Aliasing}

We first test whether the same localized weakness can correspond to different operational knowledge states. For instances sharing the same recall, understanding, reasoning, or application failure, we compare recognition and recall performance to examine whether a single weakness label conflates distinct patterns of knowledge access.

As shown in Figure~\ref{fig:knowledge_state_aliasing}, among instances with recall failure, 37.8\% still answer the corresponding recognition probe correctly, while 62.2\% also fail recognition. Similar heterogeneity appears for higher-order failures. Among units with application failure, 29.1\% fail both recognition and recall on the measured associated facts, while 13.3\% recall all of them. The corresponding proportions are 18.9\% and 3.6\% for understanding failure, and 36.6\% and 5.3\% for reasoning failure.

These results show that the same localized failure can correspond to different patterns of knowledge access. Recognition and recall evidence further separates operational knowledge-state profiles under the same failure signal, providing empirical support for \emph{knowledge-state aliasing}. Here, complete recall refers only to the measured associated facts and does not imply recall of all knowledge required by the higher-order task.

\subsection{Overall Knowledge Improvement}

As shown in Table~\ref{tab:overall_intervention_results}, RKI achieves the best performance on both knowledge-intensive benchmarks. On \textsc{HotpotEval}, RKI reaches an average score of 32.70, outperforming the strongest baseline, Self-QA, by 5.90 points. This result shows that RKI remains effective in a general-domain multi-hop setting where answering questions requires integrating multiple supporting facts.

The improvement is more pronounced on \textsc{SafeInternalEval}, where RKI achieves 58.92, compared with 28.00 for Self-QA and 22.33 for GraphGen. Averaged across the two benchmarks, RKI obtains 45.81, which is 18.41 points above the strongest baseline average of 27.40. Overall, these results show that RKI consistently improves knowledge incorporation across both public and enterprise-internal settings. The analyses that follow further examine whether these gains arise from state-aware intervention selection and recursive knowledge-state re-estimation.

\subsection{Recursive Knowledge-State Re-estimation}

\begin{table}[t]
\centering
\caption{Performance of different knowledge incorporation methods. Bold marks the best result.}
\label{tab:overall_intervention_results}
\small
\renewcommand{\arraystretch}{1.12}
\begin{tabular*}{\linewidth}{
@{\extracolsep{\fill}}
l
c
c
c
@{}}
\toprule
\textbf{Method}
& \textsc{HotpotEval}
& \textsc{SafeInternalEval}
& \textbf{Avg.} \\
\midrule

GENIE
& 23.00
& 9.75
& 16.38 \\

LongForm
& 24.40
& 11.17
& 17.79 \\

WRAP
& 22.90
& 9.75
& 16.33 \\

Self-QA
& 26.80
& 28.00
& 27.40 \\

EntiGraph
& 26.50
& 21.50
& 24.00 \\

GraphGen
& 23.70
& 22.33
& 23.02 \\

\textbf{RKI (Ours)}
& \textbf{32.70}
& \textbf{58.92}
& \textbf{45.81} \\

\bottomrule
\end{tabular*}
\end{table}

We further examine whether the operational knowledge state can change even when an observed weakness persists after a model update. We select knowledge units whose associated application questions remain failed both before and after the first intervention. As shown in Figure~\ref{fig:persistent_application_failure_state_transition}, 57.1\% of these units show a different recognition--recall pattern after the intervention despite the persistent application weakness. In particular, the proportion of units that fully recall the measured associated facts increases from 12.2\% to 40.9\%.

These results show that a persistent observed weakness does not imply a stable operational knowledge state. Reusing the previous intervention based only on the unchanged failure signal may therefore rely on stale state information after the model has been updated. This directly motivates recursive knowledge-state re-estimation: after each intervention, RKI re-evaluates the updated model, collects new diagnostic evidence, and revises its estimate of the current operational knowledge state before selecting the next intervention.

We next examine how the operational knowledge-state profile evolves across RKI updates. Figure~\ref{fig:rki_progression} reports the six diagnostic dimensions together with overall performance at each phase. Phase 0 denotes the base model, while the reference line marks the score of 50.50 obtained by direct training on the full intervention data. The overall score increases from 17.41 at Phase 0 to 31.50, 51.03, and 58.92 over successive updates. The final phase is 8.42 points above the full-data training reference, showing that recursive state-aware intervention yields stronger overall performance than direct full-data training in this setting.

By construction, acquisition reaches 1 after the first intervention because all target facts are included in the knowledge injection. This indicates complete exposure to the target knowledge and does not imply reliable recognition, recall, or use. The remaining dimensions therefore continue to evolve across subsequent updates. From Phase 2 to Phase 3, recall increases from 37.1 to 38.9, while understanding rises from 62.1 to 69.6 and application from 56.4 to 69.2. Thus, different dimensions of the operational knowledge state need not improve at the same rate. Together with the recognition--recall changes observed under persistent application weakness, these results support re-evaluating the model after each intervention, re-estimating its current operational knowledge state, and adapting subsequent interventions accordingly.

\begin{figure}[t]
  \centering
  \begin{minipage}[t]{0.48\linewidth}
    \centering
    \includegraphics[width=\linewidth,height=0.19\textheight,keepaspectratio]{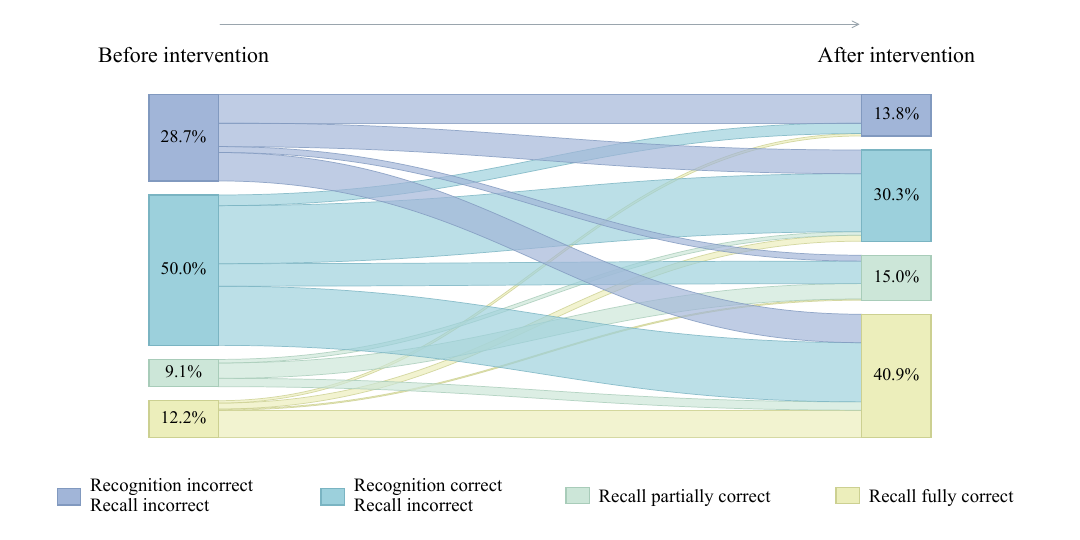}
    \caption{Operational knowledge-state transitions under a persistent application weakness.}
    \label{fig:persistent_application_failure_state_transition}
  \end{minipage}
  \hfill
  \begin{minipage}[t]{0.48\linewidth}
    \centering
    \includegraphics[width=\linewidth,height=0.19\textheight,keepaspectratio]{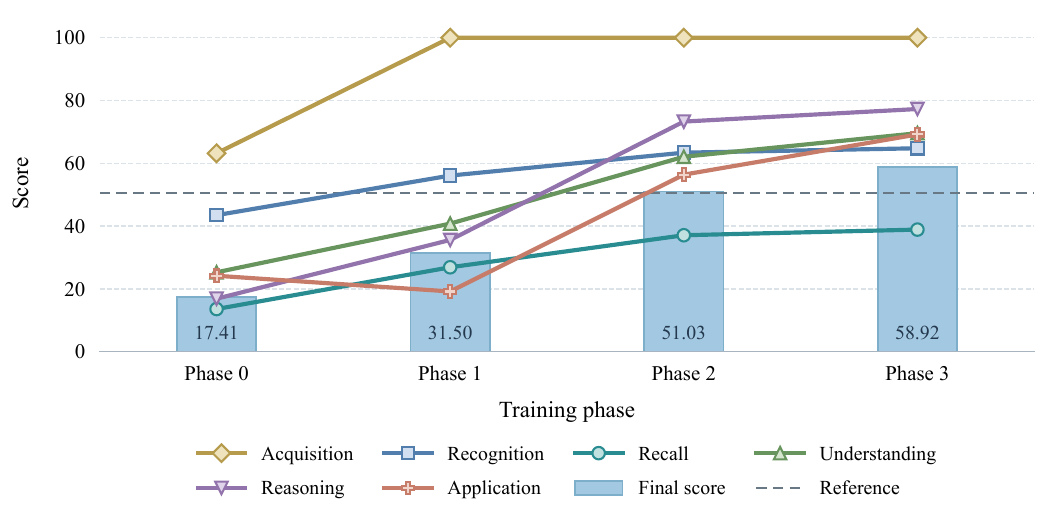}
    \caption{Evolution of the operational knowledge state and overall performance across recursive RKI updates.}
    \label{fig:rki_progression}
  \end{minipage}
\end{figure}

\subsection{Ablation and Analysis}

To isolate the contribution of state-aware intervention selection and recursive state re-estimation, we conduct an ablation study on \textsc{SafeInternalEval}. Random Intervention, Fixed Access, Fixed Utilization, Weakness Only, and RKI follow the same four-phase update schedule. Fixed Access and Fixed Utilization apply the same intervention type to all targeted knowledge units, while Weakness Only selects interventions based on the localized weakness $W_t$ without conditioning on the operational knowledge state $Z_t$. As an additional reference, Full-Data Training directly trains once on the complete intervention dataset without selective intervention assignment. Frozen State instead estimates the operational knowledge state once before the first intervention and reuses this estimate in all subsequent rounds, thereby removing recursive re-estimation while retaining state-aware intervention selection.

As shown in Table~\ref{tab:intervention_selection_ablation}, Fixed Utilization (48.00) outperforms Fixed Access (23.17), consistent with the larger share of utilization-oriented items in \textsc{SafeInternalEval}: improving access alone does not support understanding, reasoning, or application. Random Intervention reaches 29.33, remaining below the informed strategies.

Full-Data Training reaches 50.50, while Weakness Only reaches 53.75, exceeding both full-data training and the strongest fixed policy. Weakness localization therefore helps assign intervention data, and simply using all available data is insufficient.

Frozen State drops to 39.00, below both RKI and Weakness Only. Stale state estimates can therefore be less useful than ignoring the state, directly supporting recursive re-estimation after each update.

RKI achieves 58.92, 5.17 points above Weakness Only and 8.42 above Full-Data Training. Since both RKI and Weakness Only use the same localized weakness, this gain supports the additional value of state-aware intervention selection.

\begin{table}[t]
\centering
\caption{Ablation study on intervention selection on \textsc{SafeInternalEval}. Bold marks the best result.}
\label{tab:intervention_selection_ablation}
\small
\renewcommand{\arraystretch}{1.12}
\begin{tabular}{l c}
\toprule
\textbf{Method} & \textbf{Average Score} \\
\midrule

Random Intervention & 29.33 \\
Fixed Access & 23.17 \\
Fixed Utilization & 48.00 \\
Full-Data Training & 50.50 \\
Weakness Only & 53.75 \\
Frozen State & 39.00 \\
\textbf{RKI (Ours)} & \textbf{58.92} \\

\bottomrule
\end{tabular}
\end{table}

\section{Conclusion}

This work examines a limitation of evaluation-guided knowledge improvement: localizing a model's knowledge weaknesses does not necessarily provide sufficient information for deciding how those weaknesses should be addressed. We characterize this ambiguity as \emph{knowledge-state aliasing}, where the same observed weakness can correspond to different operational knowledge states, and show that weakness-only intervention selection can be insufficient when these states favor different interventions. Building on this observation, we propose Recursive Knowledge Improvement (RKI), which uses graph-structured knowledge units to align diagnosis, intervention, and re-evaluation, selects interventions based on the model's current operational knowledge state, and re-estimates that state after each update.

On \textsc{HotpotEval} and \textsc{SafeInternalEval}, RKI improves the average score across the two benchmarks by 18.41 points over the strongest baseline. Further analyses show that operational knowledge states provide information for intervention selection beyond weakness localization, and that these states can change after model updates even when the observed weakness persists. Taken together, these results suggest that knowledge improvement is better treated as a closed-loop process, with interventions continuously adapted to the model's evolving knowledge state.
\section{Limitation}

This work has two limitations. First, the operational knowledge state is estimated from diagnostic evidence and intervention history. It is intended as an actionable representation for intervention decisions, not as a recovery of the model's true latent knowledge state. The current state representation should therefore be viewed as a diagnostic abstraction for knowledge improvement, whose effectiveness depends on the chosen diagnostic dimensions and estimation procedure. Second, RKI currently performs state-aware intervention selection over a predefined set of candidate interventions and re-evaluates the model after each update to re-estimate its knowledge state, which introduces additional evaluation and training costs. Future work could explore more flexible state representations and intervention spaces, as well as more efficient recursive knowledge improvement while preserving intervention effectiveness.

\clearpage
\begingroup
\setlength{\bibsep}{0pt}
\bibliographystyle{colm2024_conference}
\bibliography{main}
\endgroup

\clearpage
\appendix

\section{Proof of Proposition~\ref{prop:weakness_only_insufficient}}
\label{app:proof_weakness_only}

\begin{proof}
Suppose, for contradiction, that there exists an intervention rule
$\pi:\mathcal{W}\rightarrow\mathcal{A}$ conditioned only on the observed weakness
that is optimal for both $z$ and $z'$. Unlike the state-aware selector $\Psi$ in
Eq.~\eqref{eq:rki_state_aware_intervention}, such a rule does not condition on the
model-level state. Since $z,z'\in\Gamma(w)$ correspond to the
same observed weakness $w$, the rule selects the same intervention
\begin{equation}
a_w=\pi(w)
\end{equation}
for both states.

For $\pi$ to be optimal in state $z$, we must have
\begin{equation}
a_w\in\mathcal{A}^{\star}(z).
\end{equation}
Similarly, optimality in state $z'$ requires
\begin{equation}
a_w\in\mathcal{A}^{\star}(z').
\end{equation}
Therefore,
\begin{equation}
a_w
\in
\mathcal{A}^{\star}(z)
\cap
\mathcal{A}^{\star}(z').
\end{equation}

However, by the assumption of Proposition~\ref{prop:weakness_only_insufficient},
\begin{equation}
\mathcal{A}^{\star}(z)
\cap
\mathcal{A}^{\star}(z')
=
\varnothing,
\end{equation}
which is a contradiction. Hence, no intervention rule conditioned only on $w$
can be optimal for both $z$ and $z'$.
\end{proof}

\section{RKI Knowledge Units and Diagnostic Dimensions}
\label{app:rki_units_and_dimensions}

We provide detailed definitions of the knowledge units and diagnostic dimensions introduced in Section~\ref{sec:recursive_knowledge_improvement}.

\begin{table}[t]
\centering
\caption{Graph-structured knowledge units in RKI and their functional roles.}
\label{tab:rki_knowledge_units}
\small
\renewcommand{\arraystretch}{1.2}
\begin{tabular*}{\linewidth}{@{\extracolsep{\fill}}lp{0.78\linewidth}@{}}
\toprule
\textbf{Unit} & \textbf{Role} \\
\midrule
Relation
& Represents a direct relational fact between two entities as the basic unit of relational knowledge. \\
Contrast
& Organizes related facts or local structures along a shared comparison dimension. \\
Ego
& Represents the local relational neighborhood centered on a focal entity. \\
Cycle
& Represents a closed relational path formed by multiple entities and their relations. \\
Chain
& Represents a multi-hop relational path connecting a sequence of entities. \\
Tree
& Represents a branching relational structure extending from a shared local context. \\
\bottomrule
\end{tabular*}
\end{table}
\begin{table}[t]
\centering
\caption{Diagnostic dimensions, operational aspects, and interpretations for knowledge-state estimation.}
\label{tab:rki_state_probes}
\small
\renewcommand{\arraystretch}{1.2}
\begin{tabular*}{\linewidth}{
@{\extracolsep{\fill}}
l
l
p{0.51\linewidth}
@{}}
\toprule
\textbf{Dimension} & \textbf{Aspect} & \textbf{Interpretation} \\
\midrule
Acquisition
& Availability
& Whether the relevant knowledge shows evidence of having been incorporated into the model. \\

Recognition
& Accessibility
& Whether the model can identify the relevant knowledge when candidate information or relevant cues are provided. \\

Recall
& Accessibility
& Whether the model can reliably retrieve the relevant knowledge without explicit candidate cues. \\

Understanding
& Usability
& Whether the model can correctly interpret the relations and meaning expressed by the relevant knowledge. \\

Reasoning
& Usability
& Whether the model can combine and derive information from the relevant knowledge through reasoning. \\

Application
& Usability
& Whether the model can effectively use the relevant knowledge in downstream tasks or problem settings. \\
\bottomrule
\end{tabular*}
\end{table}

\section{Prompt Templates}
\label{app:prompt_templates}

Due to space constraints, we present only the knowledge graph extraction and relation fact extraction prompts in Figures~\ref{fig:prompt_kg_extraction} and~\ref{fig:prompt_relation_fact}. Our pipeline additionally uses prompts for iterative graph extraction, relation fact extraction and filtering, diagnostic benchmark generation and filtering, and CPT- and SFT-data generation. The complete set of prompt templates will be made available in the accompanying code release.

\begin{figure*}[!tp]
  \centering
\begin{kloopwideprompt}{P1. Knowledge Graph Extraction (abridged)}
You are an NLP expert, skilled at analyzing text to extract named entities and their
relationships. Use English as the output language.

# Source Text
${input_text}

# Entity Types
${entity_types}

# Steps
1. Identify all entities. Judge from the source text only; do NOT add facts from outside
   the source. Keep only objects with a distinct identity or stable meaning that can be
   reused in later knowledge organization.

   For each entity, extract:
   - name: the most complete and stable name used in the source
   - type: one of the predefined entity types
   - description: identity, attributes, or roles explicitly expressed in the source

2. Identify relationships between entities. Find entity pairs that are clearly related and
   explicitly supported by the source.

   For each relation, extract:
   - source
   - target
   - description: the concrete semantic connection between the two entities

3. Output one complete XML document. Precision is preferred over recall; do not add
   unsupported entities or relations.

# Output Format
<knowledge_graph>
  <entities>
    <entity>
      <name>entity name</name>
      <type>entity type</type>
      <description>source-grounded description</description>
    </entity>
  </entities>
  <relationships>
    <relationship>
      <source>source entity name</source>
      <target>target entity name</target>
      <description>concrete relation</description>
    </relationship>
  </relationships>
</knowledge_graph>
\end{kloopwideprompt}
  \caption{Knowledge graph extraction prompt.}
  \label{fig:prompt_kg_extraction}
\end{figure*}

The extraction process additionally uses continuation and termination
prompts. The continuation prompt requests only entities and relations
that remain missing from previous rounds, while the termination prompt
determines whether another extraction round is necessary.

\medskip
\noindent
The relation fact extraction prompt first evaluates whether a relation
is worth testing, then determines which querying directions admit a
unique answer.

\begin{figure*}[!tp]
  \centering
\begin{kloopwideprompt}{P2. Relation Fact Extraction}
You are an expert constructing knowledge evaluation benchmarks, working through two
ordered steps: value judgment first, then direction judgment.

# Relation Pair
${relation_block}

# Source Material
${raw_context_block}

# Step 1: Value Judgment
Judge from the source material alone whether the relation between the two entities is
worth testing.

Set valuable to false if:
- the relation is unsupported or semantically indeterminate;
- it contains multiple independent propositions;
- it represents low-value metadata, transient states, or trivia;
- no direction can yield a question with a unique answer.

# Step 2: Direction Judgment
Only when valuable is true, determine the subject/object direction and whether each
direction admits a uniquely answerable question.

Choose exactly one:
- both
- forward
- reverse
- none

# Output Format
If the value judgment fails:
{"valuable": false, "valuable_reason": "..."}

Otherwise:
{"valuable": true,
 "valuable_reason": "...",
 "subject": "...",
 "object": "...",
 "direction": "both/forward/reverse/none",
 "direction_reason": "..."}
\end{kloopwideprompt}
  \caption{Relation fact extraction prompt (P2, abridged). Value judgment precedes the assessment of valid querying directions.}
  \label{fig:prompt_relation_fact}
\end{figure*}

\section{Knowledge Graph Visualization}

Figure~\ref{fig:hotpot_graph_visualization} illustrates the local structure of the knowledge graph constructed from the Hotpot corpus. Panel (a) shows the complete two-hop neighborhood centered on \textit{Star Wars}, containing 130 entities and 162 edges. Panel (b) provides a detailed view of 15 selected entities and all 25 edges among them. Node colors and shapes distinguish entity types, illustrating connections among works, people or fictional characters, and organizations.

\begin{figure}
    \centering
    \includegraphics[width=1\linewidth]{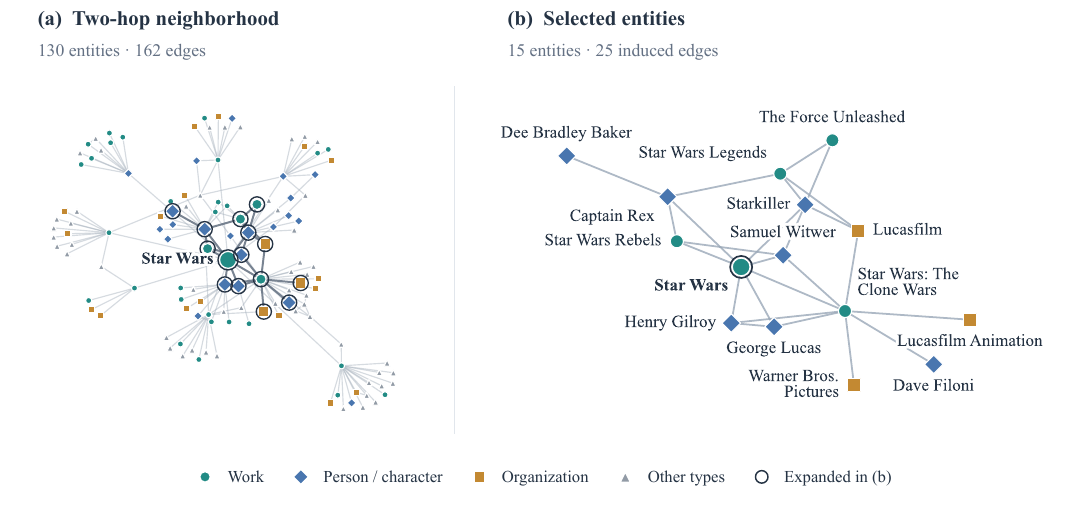}
    \caption{Local structure of the knowledge graph constructed from the Hotpot corpus. (a) The complete two-hop neighborhood of \textit{Star Wars}, with 130 entities and 162 edges. Outlined nodes are expanded in (b). (b) The induced subgraph of 15 selected entities and their 25 edges. Colors and markers denote entity types. All edges are undirected.}
    \label{fig:hotpot_graph_visualization}
\end{figure}

\section{Supplementary Experimental Details}
\label{app:appendix_experimental_details}

\subsection{Training Configurations}
\label{app:training_configurations}

We use separate training configurations for CPT and SFT, as summarized in Table~\ref{tab:training_configurations}.

\begin{table}[t]
\centering
\caption{Training configurations for CPT and SFT.}
\label{tab:training_configurations}
\small
\renewcommand{\arraystretch}{1.15}
\begin{tabular*}{\linewidth}{
@{\extracolsep{\fill}}
l
c
c
@{}}
\toprule
\textbf{Parameter}
& \textbf{CPT}
& \textbf{SFT} \\
\midrule

Context Length
& 2,048
& 2,048 \\

Global Batch Size
& 16
& 64 \\

Epochs
& 1
& 1 \\

Peak Learning Rate
& $5\times10^{-6}$
& $2\times10^{-5}$ \\

LR Scheduler
& Cosine
& Linear \\

Warmup Ratio
& 0.05
& 0.03 \\

Weight Decay
& 0.01
& 0.1 \\

Optimizer
& AdamW
& AdamW \\

Gradient Clipping
& 1.0
& 1.0 \\

Precision
& BF16
& BF16 \\

\bottomrule
\end{tabular*}
\end{table}

\subsection{Inference Configurations}
\label{app:inference_configurations}

We use the same inference configuration across all models and benchmarks, with greedy decoding, a maximum input length of 5,120 tokens, a maximum generation length of 2,048 tokens, and thinking mode disabled.

\subsection{Benchmark Details and Examples}
\label{app:benchmark_details_examples}

Each benchmark consists of a source corpus used for graph construction, diagnostic evaluation, and knowledge incorporation, together with a QA set for closed-book evaluation. For the public benchmark, we sample from the dataset released with GraphGen and retain test instances that can be aligned with their corresponding source texts.

\textsc{HotpotEval} is derived from HotpotQA~\citep{yang2018hotpotqa}, where answering a question requires combining evidence distributed across multiple supporting passages. We use the corresponding supporting texts as the source corpus and retain aligned questions for evaluation.

Table~\ref{tab:benchmark_examples} provides a representative corpus--test pair from \textsc{HotpotEval}. We additionally evaluate on \textsc{SafeInternalEval}, which is constructed from enterprise-internal safety and compliance documents. Its evaluation questions are manually constructed and reviewed against the source documents, and are designed to avoid direct overlap with the knowledge units used for intervention-data generation. Because the underlying documents contain proprietary information, we do not release or display examples from this benchmark.

\begin{table*}[t]
\centering
\caption{Representative source-corpus and evaluation examples from the public benchmarks.}
\label{tab:benchmark_examples}
\scriptsize
\renewcommand{\arraystretch}{1.15}
\begin{tabular*}{\textwidth}{
@{\extracolsep{\fill}}
p{0.12\textwidth}
p{0.40\textwidth}
p{0.40\textwidth}
@{}}
\toprule
\textbf{Dataset}
& \textbf{Corpus}
& \textbf{Test} \\
\midrule

\textsc{HotpotEval}
&
Bolax is a genus of flowering plants in the Apiaceae, with four or five species, and is endemic to temperate South America.

Jovibarba heuffelii, commonly known as Hen-and-chickens, is a plant species native to the Balkans and the Carpathians in Europe. \ldots
&
\textbf{Question:} Which plant has more species, Bolax or Jovibarba?

\textbf{Answer:} Bolax. \\

\bottomrule
\end{tabular*}
\end{table*}

\subsection{Evaluation and Judging Protocol}
\label{app:judging_protocol}

For open-ended evaluation, we use DeepSeek-V4-Pro as the LLM judge with temperature set to 0. Each evaluation dimension is scored on a 0--10 scale, with a score of at least 6 considered a pass. We use benchmark-specific evaluation dimensions, as summarized in Table~\ref{tab:judging_rubric}. An instance passes the strict criterion only when all dimensions receive scores of at least 6.

\begin{table*}[t]
\centering
\caption{Judging rubrics across benchmarks. Each dimension is scored from 0 to 10, with scores of at least 6 considered passing. For the internal benchmark, understanding evaluates association coverage, while reasoning and application evaluate reasoning coverage.}
\label{tab:judging_rubric}
\small
\renewcommand{\arraystretch}{1.15}
\begin{tabularx}{\textwidth}{
@{}
p{0.235\textwidth}
l
X
@{}}
\toprule
\textbf{Benchmark}
& \textbf{Dimension}
& \textbf{Evaluation Criterion} \\
\midrule

\textsc{HotpotEval}
& \texttt{final\_fact}
& Whether the final answer states the same fact as the reference answer. \\

& \texttt{completeness}
& Whether the answer covers the required constituents and qualifiers of the
  reference answer. \\

\midrule

\textsc{SafeInternalEval}\newline\textit{Understanding}
& \texttt{factual\_accuracy}
& Whether every fact stated in the answer is correct; hallucinated facts lose
  points. \\

& \texttt{association\_coverage}
& How many of the required answer points the answer covers. Paraphrase and
  superordinate summaries count as coverage; a point mentioned with a wrong
  association does not. \\

& \texttt{conclusion\_correctness}
& Whether the core judgment (overall positioning, the difference identified, or
  the misconception clarified) matches the reference, with the correct
  characterisation basis. \\

\midrule

\textsc{SafeInternalEval}\newline\textit{Reasoning and Application}
& \texttt{factual\_accuracy}
& Whether every fact stated in the answer is correct; hallucinated facts lose
  points. \\

& \texttt{reasoning\_coverage}
& How many of the required answer points the answer covers. Coverage must be
  substantively correct: naming a point while stating its derivation wrongly does
  not count. \\

& \texttt{conclusion\_correctness}
& Whether the final conclusion matches the reference, with the correct
  characterisation basis rather than one derivable from general knowledge alone. \\

\bottomrule
\end{tabularx}
\end{table*}

\subsection{Agreement Between LLM Judgments and Human Annotations}
\label{app:judging_agreement}

\begin{table}[!htbp]
\centering
\caption{Agreement between LLM judgments and human annotations on 100 stratified instances per benchmark.}
\label{tab:llm_human_agreement}
\small
\renewcommand{\arraystretch}{1.15}
\begin{tabular*}{\linewidth}{
@{\extracolsep{\fill}}
l
c
c
@{}}
\toprule
\textbf{Benchmark}
& \textbf{Pass Agreement}
& \textbf{Cohen's $\kappa$} \\

\midrule

\textsc{HotpotEval}
& 98\%
& $0.95$ \\

\textsc{SafeInternalEval}
& 96\%
& $0.89$ \\

\bottomrule
\end{tabular*}
\end{table}

To validate the LLM judge, we sample 100 instances from each benchmark, stratified by automated evaluation outcomes. Human annotators independently evaluate the sampled instances using the same rubric and pass threshold without access to the LLM judgments. An instance is considered to pass only when all dimensions receive scores of at least 6.

As shown in Table~\ref{tab:llm_human_agreement}, LLM and human judgments achieve pass agreement of 98\% and 96\% on \textsc{HotpotEval} and \textsc{SafeInternalEval}, with Cohen's $\kappa$ of $0.95$ and $0.89$, respectively. These results indicate consistently high agreement between automatic and human evaluation across both benchmarks.

\end{document}